\documentclass[letterpaper, 10 pt, conference]{ieeeconf}  

\IEEEoverridecommandlockouts                              

\usepackage{tabularx} 
\usepackage{booktabs} 
\usepackage{ragged2e} 
\usepackage{amsmath}     
\usepackage{amssymb}     
\usepackage{mathtools}   
\let\labelindent\relax
\usepackage{enumitem} 
\usepackage{xcolor}  
\usepackage{colortbl}  
\usepackage{graphicx}
\usepackage{subcaption} 

\usepackage[table]{xcolor}
\definecolor{oursred}{HTML}{ff6361}
\newcommand{\ourscell}[1]{{\scriptsize\textcolor{oursred}{#1}}}

\definecolor{deepred}{HTML}{FF6361}  

\usepackage{hyperref}
\usepackage{colortbl} 
\usepackage{multirow}
\makeatletter
\newcommand{\thickhline}{%
    \noalign {\ifnum 0=`}\fi \hrule height 1pt
    \futurelet \reserved@a \@xhline
}
\newcommand{\ci}[1]{\tiny{\textcolor{gray}{~($\pm #1$)}}}
\makeatother
\usepackage[export]{adjustbox}
\usepackage{bm}
\usepackage{xcolor}
\usepackage{listings}
\definecolor{mygray}{gray}{.9}
\usepackage{subcaption} 
\usepackage{makecell}
\usepackage{graphicx}
\usepackage{caption} 
\usepackage{cuted}
\usepackage{placeins}
\let\labelindent\undefined
\newlength\labelindent

\renewcommand{\arraystretch}{1.25} 

\usepackage{amsthm}

\title{\LARGE \bf CMR: Contractive Mapping Embeddings for Robust Humanoid Locomotion on Unstructured Terrains}

\author{%
Qixin Zeng\textsuperscript{1},
Hongyin Zhang\textsuperscript{2},
Shangke Lyu\textsuperscript{3},
Junxi Jin\textsuperscript{1} \\
Donglin Wang\textsuperscript{2,*},
Chao Huang\textsuperscript{1,*}
\thanks{%
\textsuperscript{1}University of Southampton;
\textsuperscript{2}Westlake University;
\textsuperscript{3}Nanjing University.
\textsuperscript{*}Corresponding author.
}%
}

\newtheorem{theorem}{Theorem}

\usepackage{romannum}

\begin{document}

\maketitle
\thispagestyle{empty}
\pagestyle{empty}

\begin{abstract}
Robust disturbance rejection remains a longstanding challenge in humanoid locomotion, particularly on unstructured terrains where sensing is unreliable and model mismatch is pronounced. While perception information, such as height map, enhances terrain awareness, sensor noise and sim-to-real gaps can destabilize policies in practice. In this work, we provide theoretical analysis that bounds the return gap under observation noise, when the induced latent dynamics are contractive. Furthermore, we present \textbf{C}ontractive \textbf{M}apping for \textbf{R}obustness (CMR) framework that maps high-dimensional, disturbance-prone observations into a latent space, where local perturbations are attenuated over time. Specifically, this approach couples contrastive representation learning with Lipschitz regularization to preserve task-relevant geometry while explicitly controlling sensitivity. Notably, the formulation can be incorporated into modern deep reinforcement learning pipelines as an auxiliary loss term with minimal additional technical effort required. Further, our extensive humanoid experiments show that CMR potently outperforms other locomotion algorithms under increased noise. 
\end{abstract}

\section{INTRODUCTION}
Humanoid robots hold immense potential for operating in human environments~\cite{gu2025humanoid}, especially with the development of whole-body control, which has expanded feasible task sets ~\cite{he2024learning,he2024omnih2o}. Alternatively, achieving stable locomotion remains challenging due to terrain irregularities, actuation perturbations and perception noise~\cite{bin2025survey}. In addition, deployment in unstructured environments typically requires exteroceptive sensing using radar, cameras, or height map~\cite{long2024learning,wang2025beamdojo,xie2025humanoid}. In these extreme scenes, sensing degradation and model mismatch make robustness more critical~\cite{miki2024learning}.

Approaches to robust humanoid locomotion broadly fall into two categories: model-based control and learning-based methods. On the one hand model-based approaches, such as the ZMP~\cite{nakaura2002balance}, LIP~\cite{1041633}, SLIP~\cite{8423425}, and SRBD controllers~\cite{Li2023Dynamic} offer interpretability but rely on restrictive assumptions, such as fixed CoM height and simplified centroidal dynamics, limiting their effectiveness on variable terrains~\cite{Pontón2020Efficient,Winkler2018Gait}. Moreover, model-based controllers solve deterministic optimal control and thus struggle with measurement uncertainty, which may lead to degraded performance under noisy perceptions and imprecise motor actuation~\cite{he2025attention}.
In contrast, learning-based methods have emerged as powerful tools by leveraging large-scale parallel simulations~\cite{rudin2022learning}. Notably, modern policy optimization~\cite{radosavovic2024real} can synthesize more complex behaviors, particularly in the context of Reinforcement Learning (RL) in humanoid robots. Previous work~\cite{zhuang2024humanoid,Radosavovic2023Real-world} introduced proprioception-only internal embeddings via contrastive learning, enabling robust locomotion without exteroception. Since solely relying on proprioceptive observations is insufficient to achieve a robust sim-to-real transfer, Wang et al.~\cite{wang2024toward} evaluated different types of explicit and implicit estimations incorporated in humanoid robot control. Even when augmenting environment estimation, maintaining robustness on more precise and challenging terrains such as stairs, stepping stones, and balance beams remains nontrivial. For these  environments previous work has utilized cameras~\cite{zhuang2024humanoid} and radar~\cite{long2024learning,wang2025beamdojo,he2025attention}, yet these exteroception devices introduce additional noise~\cite{bin2025survey}. 

\begin{figure}[t]
\centering
\includegraphics[width=\linewidth]{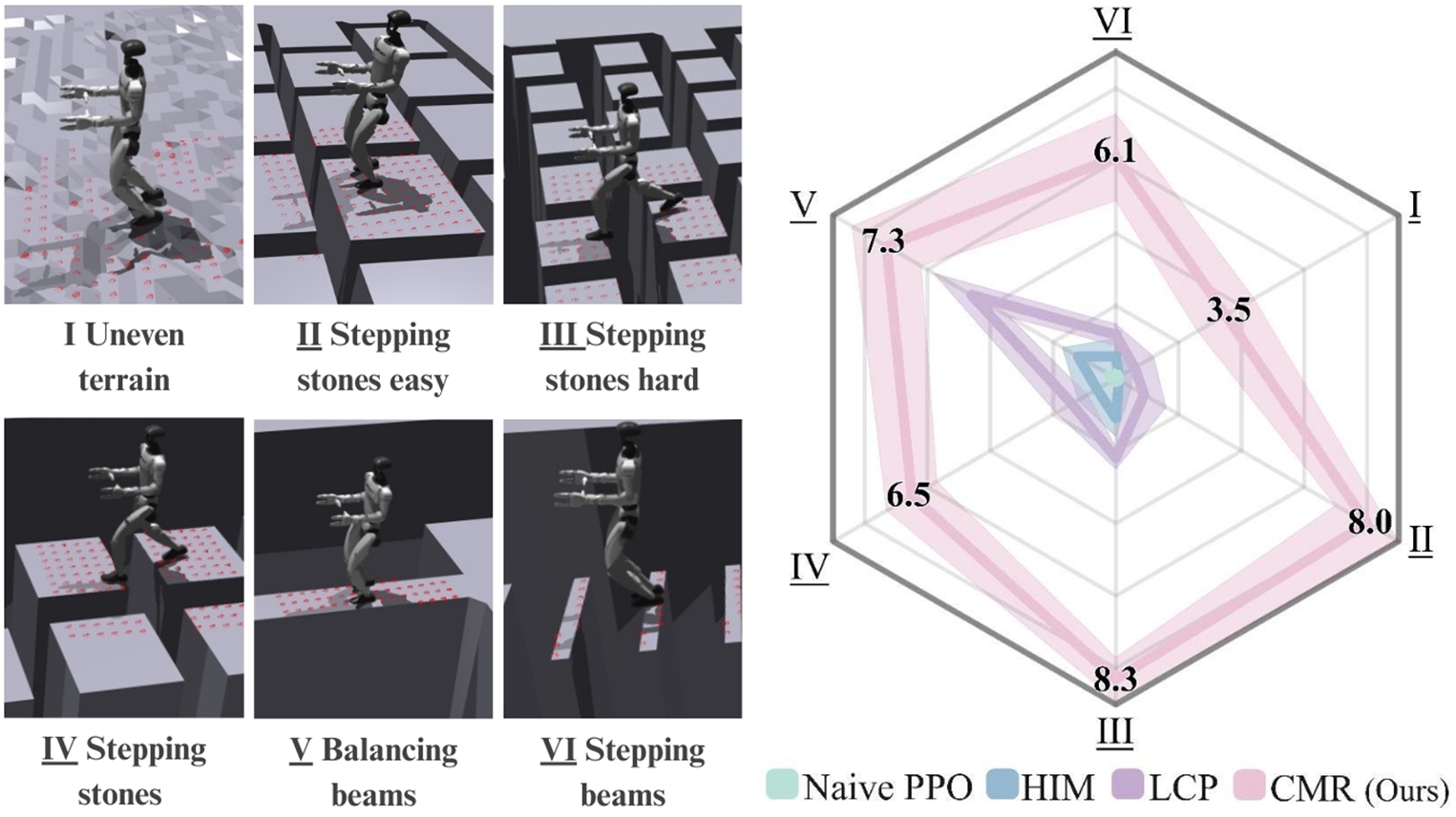}
\caption{The left panel illustrates diverse types of challenging terrains I–VI for humanoid locomotion tasks under \textcolor{deepred}{\textbf{noisy}} observations. While the radar plot on the right shows that {CMR{ \ourscell{(\textbf{Ours})}}} consistently outperforms the baselines under Noise Case \Romannum{3}. More results are further reported in Fig.~\ref{fig:noise_level_comparison}.
}   
\label{fig:terrains sketch}
\end{figure}


Additionally, existing approaches that ensure robustness against observation noise have profound limitations. Traditional methods mitigate noise effects through low-pass filters~\cite{ji2022hierarchical,li2021reinforcement} or smoothness rewards. Since such techniques require tuning overhead to balance smooth behavior with task completion, and filters reduce exploration capacity, they impede controller's transferability to diverse robotic systems.
A recent study, LCP~\cite{chen2024learning} applies Lipschitz constraints via differentiable gradient penalties but remains limited to basic walking behaviors on low-complexity terrains. Crucially, systematically addressing noise robustness in complex humanoid locomotion remains an open research problem.

To this end, we propose a novel \emph{Contractive Mapping for Robustness} (CMR) approach, which maps states into a latent space where disturbances are systematically attenuated rather than amplified along trajectories. Our framework combines contrastive representation learning for semantic preservation with Lipschitz-constrained control for bounded sensitivity, yielding disturbance-resilient latent dynamics. Furthermore, experiments partially shown in Fig.~\ref{fig:terrains sketch} demonstrate that CMR significantly exceeds other baseline algorithms under elevated observation noise in diverse terrains. Ablation studies further validate the effectiveness of its components. In sim-to-sim experiments, CMR enables zero-shot deployment and remains performance, providing evidence of generalization.

In summary, our contributions are three-fold:
\begin{itemize}
\item{  We  present CMR, a novel contractive-embedding framework that systematically attenuates observation perturbations while preserving task-relevant structure.}

\item{ We are the first to present contraction mapping theorem into learning-based humanoid locomotion task, establishing rigorous bounds on noise-induced return gaps.
}

\item{ We introduce a simple but effective training system that integrates seamlessly with normal deep RL pipelines. All Together, these advances enable robust humanoid locomotion across diverse and challenging terrains.
}
\end{itemize}
\section{Related Work}

\subsection{Reinforcement Learning for Humanoid Locomotion}

With large-scale parallel simulations~\cite{rudin2022learning} and modern policy optimization, deep RL can synthesize complex behaviors~\cite{radosavovic2024real}. 
Early work in humanoid locomotion primarily relied on proprioceptive feedback to avoid the complexity of visual processing. Previous work~\cite{radosavovic2024real} used observation and action history containing useful information about the world. Wang et al.\cite{wang2024toward} proved that learned state estimation is critical for enabling blind walking capability in real-world. While these approaches achieve impressive performance on relatively flat terrains. For more challenging settings including stairs and balancing beams, perception is so critical that humanoid robots can fully exploit the embodiment and perform tasks beyond locomotion on the plane~\cite{bin2025survey}. Recent studies have incorporated visual perception to handle complex terrains. Long et.~\cite{long2024learning}, who followed HIM~\cite{long2023hybrid} approach integrating internal models via contrastive learning to infer external states, used onboard elevation map to get ground-truth obstacle heights. BEAM DOJO~\cite{wang2025beamdojo} employs sampling-based foothold rewards with curriculum learning from flat to beams with robot-centric height map. Zhuang et al.~\cite{zhuang2024humanoid} presented an end-to-end approach using camera for humanoid parkour without motion priors. These methods enable complex terrain traversal but often suffer from perception noise. While CMR provides rigorous robust bounds for humanoid locomotion policies under noisy observations.

\subsection{Observation Noise in Legged Locomotion}
In real-world scenarios, legged robots often lack access to external states such as elevation mapping or contact forces. Moreover, the available onboard observations are potentially corrupted by noise. To address these challenges, prior work has leveraged mimic learning to compensate for the missing environmental information. Existing mimic-learning approaches can be broadly grouped into two frameworks: \textit{adaptation}~\cite{margolis2023walk,kumar2021rma} and \textit{teacher-student}~\cite{wu2023learning,margolis2024rapid,chen2020learning,lee2020learning}. Adaptation methods fine-tune a policy online to match the deployment environment, whereas teacher-student methods distill knowledge from a privileged policy as the teacher to a deployable policy as the student under noisy and partial observations~\cite{bogdanovic2022model}. Although these techniques mitigate the sim-to-real gap, a substantial performance drop typically remains between the teacher policy and the final deployable policy~\cite{Lee2024Learning}. Unlike the above mimic-learning frameworks, CMR purely learns a \emph{contractive state embedding} that provably limits noise propagation and integrates seamlessly with modern RL frameworks.

\subsection{Sensitivity Control and Robustness }
In control systems, ensuring solution existence and stability under constraints and perturbations is a fundamental robustness problem. Toward this goal, \textit{Jacobian regularization} and \textit{Lipschitz constraints} are widely used in system control~\cite{Daudin2021Optimal,Kim2020Hamilton-Jacobi}. On the one side, Jacobian-based methods directly penalize input-output sensitivity through gradient penalties \cite{varga2017gradient,achtibat2024attnlrp}, aiming to suppress rapid action fluctuations that could damage hardware. 
On the other side, Lipschitz constraints provide global continuity~\cite{mysore2021regularizing,9981812,pradhan2024near}, ensuring that small input perturbations lead to proportionally bounded output changes. This is because Lipschitz constraints can be verified using convex programming and semidefinite programming relaxations~\cite{fazlyab2019efficient}. Additionally, it is a critical property for maintaining stability in dynamic systems such as humanoid robots. For instance, practical implementation LCP \cite{chen2024learning} offers differentiable objectives compatible with deep learning frameworks. Different from these existing sensitivity control methods, CMR focuses on learning a contractive latent space where small disturbances are naturally attenuated over time.

\section*{Problem Formulation }

We model humanoid locomotion as a Partially Observable Markov Decision Process (POMDP) \(\langle \mathcal{S}, \mathcal{A}, f, r, \gamma,\mathcal{O} \rangle\), with the long horizon $H$. Here, \(\mathcal{S}\) encompasses the true states including joint angles, velocities, base pose, and exteroceptive features such as height map.
\(\mathcal{A}\) includes continuous motor commands across multiple Degrees of Freedom (DOF).
\(f: \mathcal{S} \times \mathcal{A} \to \Delta(\mathcal{S})\) models the nonlinear, stochastic dynamics arising from contact uncertainties and sensor noise. The Lipschitz constant of the system dynamics \(L_f\) reflects system stability~\cite{jin2020stability}.
\(r: \mathcal{S} \times \mathcal{A} \to \mathbb{R}\) encodes rewards for stability, progress, and robustness. The Lipschitz constant of the reward function \(L_r\) characterizes the robustness of the reward signal itself~\cite{de2024guaranteeing}.
\(\gamma \in [0,1)\) is the discount factor.
\(\mathcal{O}\) is the observation space, consisting of noisy exteroceptive and proprioceptive observations. Specifically, given the true state \(s_t \in \mathbb{R}^d\), the observed state \(\tilde{s}_t = s_t + \delta_s^t \in \mathcal{O}\), where \(\delta_s^t \sim \mathcal{U}([-\zeta, \zeta]^d)\) is zero-mean uniform noise~\cite{gu2025robust}. 
Our goal is to derive a robust policy in noisy conditions:
\[
\pi^* = \arg\max_{\pi} \ \mathbb{E}_{\pi} \left[ \sum_{t=0}^{\infty} \gamma^t r(\tilde{s}_t, a_t) \right],
\]
where transitions are \(\tilde{s}_{t+1} \sim f(\tilde{s}_t, a_t)\). Define the policy-gradient feature as  $g(s,a) := \nabla\log \pi\!\left(a \mid s\right)$.
Policy value is \(J(\pi) = \mathbb{E}_{\pi} \left[ \sum_{t=0}^{\infty} \gamma^t r(\tilde{s}_t, a_t) \right]\). Robustness is assessed via the return gap \(J(\pi^*) - J(\pi)\)~\cite{Kiemel2021Learning}, measuring performance degradation from perturbations.


\section{Methodology}
\begin{figure}[t]
\centering
\includegraphics[width=\linewidth]{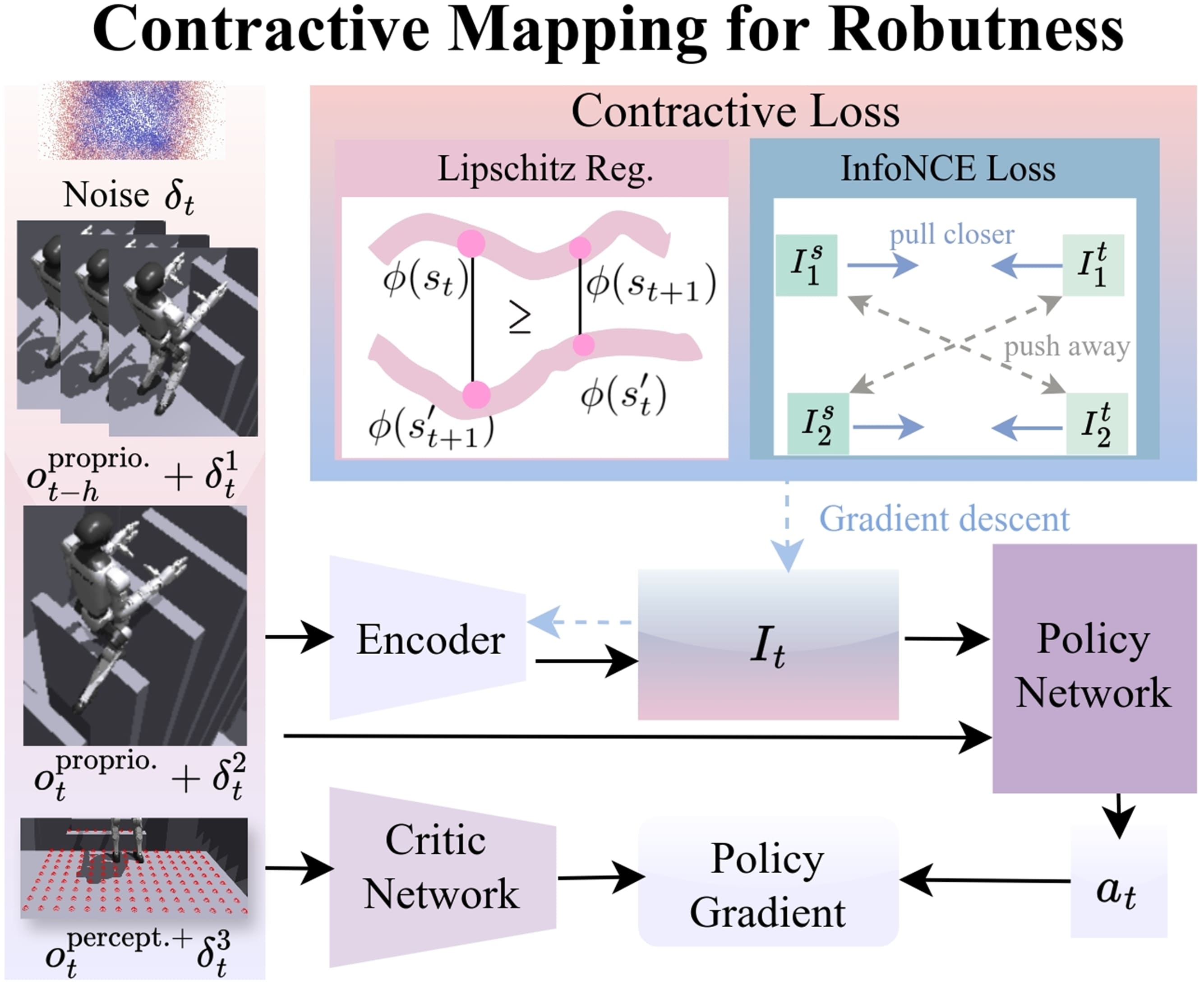}
\caption{\textbf{Overview of the CMR framework. }Noisy observations, including current and historical proprioception ($o_t^{\text{proprio}}+\delta_t^1$, $o_{t-h}^{\text{proprio}}+\delta_t^2$) and perception ($o_t^{\text{percept}}+\delta_t^3$), are encoded into the contractive embedding $I_t$. Training $I_t$ uses a Lipschitz regularization term and a contrastive objective. As a result, the trained contractive embedding $I_t$ is compatible with deep-RL pipelines as an additional input.}
\label{fig:Framework of CMR}
\end{figure}
This section details the proposed CMR framework. We begin by analyzing the performance degradation caused by observation noise. We then present the theoretical foundation for the contractive embedding space, followed by the formulation of optimization objectives. Finally, the specifics of training and implementation are presented. All detailed derivations are provided in the Appendix.

\subsection{Motivating Analysis}
\begin{theorem}[Return Gap under Observation Noise]
Assume perturbed observations $\tilde{s}_t = s_t + \delta_s^t$ with $\|\delta_s^t\| \leq \delta_{max} $, current policy \( \pi \)  with a bounded policy Jacobian $\|\nabla_s \pi_(s)\| \leq M$ \cite{Gannot2019A}, and optimized policy \( \pi^* \). Then:
\label{Return Gap under Observation Noise}
\[
J(\pi^*) - J(\pi) \leq \mathcal{O}(H L_r {L_f}^H) \cdot M  \delta_{max}.
\]
\end{theorem}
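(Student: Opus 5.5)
The plan is a simulation-lemma argument: track how the deviation between the nominal rollout and the noise-perturbed rollout grows through the Lipschitz dynamics over the horizon $H$, then convert the state deviation into a reward deviation through $L_r$. We interpret $\pi^*$ as acting on clean states $s_t$ and $\pi$ as acting on the perturbed observations $\tilde{s}_t = s_t + \delta_s^t$. We couple the two trajectories from the same initial state and the same noise realization of $f$, which is legitimate by treating $f$ as a deterministic map of a shared random seed with Lipschitz constant $L_f$. Any residual suboptimality of $\pi$ relative to $\pi^*$ on clean inputs is absorbed into the constant, or assumed zero as in the intended reading, where $\pi$ is the optimal policy evaluated on noisy inputs.

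The key steps, in order, are as follows.

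First, bound the per-step action deviation. By the mean value theorem and the Jacobian bound $\|\nabla_s \pi(s)\|\le M$,
\[
\|\pi(\tilde{s}_t)-\pi(s_t)\|\le M\|\delta_s^t\|\le M\delta_{max}.
\]

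Second, propagate the state deviation $e_t := \|s_t - s'_t\|$ between the clean and perturbed rollouts. Lipschitzness of $f$ in $(s,a)$ gives
\[
e_{t+1}\le L_f\bigl(e_t + \|a_t-a'_t\|\bigr).
\]
The action gap at step $t$ is bounded by $M(e_t+\delta_{max})$. Unrolling this recursion and assuming $L_f\ge 1$ with $M$ treated as an $\mathcal{O}(1)$ factor in the exponent, we obtain
\[
e_t \le \mathcal{O}(t\,L_f^{t})\,M\delta_{max} \le \mathcal{O}(L_f^{H})\,M\delta_{max}
\]
up to the factor of $t$, which we keep for the final sum.

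Third, bound the reward gap at each step by $L_r\bigl(e_t + \|a_t-a'_t\|\bigr)$, and sum over $t\le H$. Using $\gamma^t\le 1$, a total of $H$ terms each of size $\mathcal{O}(L_r L_f^H M\delta_{max})$ gives the claimed
\[
J(\pi^*)-J(\pi)\le \mathcal{O}(H L_r L_f^H)\,M\delta_{max}.
\]
Beyond horizon $H$, the tail is either truncated by the finite-horizon model or absorbed via $\gamma^H$ times a bounded reward.

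The main obstacle is the second step. The recursion actually produces $\bigl(L_f(1+M)\bigr)^{t}$ rather than $L_f^{t}M$, because the policy feeds the state error back into the dynamics. To land exactly on the stated form, I would first try to interpret $L_f$ as the closed-loop Lipschitz constant of $s\mapsto f(s,\pi(s))$. With that reading, the only open-loop injection of noise is the additive term $L_f M\delta_{max}$ per step, which gives
\[
e_t\le \sum_{k<t} L_f^{\,t-k} M\delta_{max} \le t\,L_f^{t} M\delta_{max}.
\]
Summing with $L_r$ then yields at most an $H^2$ factor, and this is reduced to $H$ by bounding $\sum_t e_t$ by $H\max_t e_t$ with the $t$ factor folded into the $\mathcal{O}$ constant for $L_f>1$, since $\sum_{t\le H} L_f^t=\mathcal{O}(L_f^H)$ there. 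A second subtlety is stochastic $f$. The fallback is to measure deviations in Wasserstein-1 distance, where the Lipschitz propagation argument goes through verbatim.
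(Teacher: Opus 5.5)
Your proposal follows the same simulation-lemma route as the paper's appendix proof: a Jacobian bound on the action gap, Lipschitz propagation $x_{t+1}\le L_f(x_t+M\delta_{\max})$ unrolled to $\mathcal{O}(L_f^t)M\delta_{\max}$, then $L_r$ and a sum over $H$. It also uses the same reading that $\pi^*$ is just $\pi$ fed clean states, $a_t^*=\pi(s_t^*)$, and this is the only reading that works, since a residual clean-input suboptimality cannot be absorbed into a bound that vanishes as $\delta_{\max}\to 0$. The feedback issue you flag is genuine and the paper does not handle it: it derives $\|a_t-a_t^*\|\le M(x_t+\delta_{\max})$ and then drops the $Mx_t$ term in both the dynamics recursion and the reward step. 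Your closed-loop reading of $L_f$ is therefore exactly what makes the paper's line valid, provided you also read $L_r$ closed-loop (or absorb an extra $1+M$ factor into the constant) and take $L_f>1$ so that $\sum_{i<t}L_f^i=\mathcal{O}(L_f^t)$.
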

Humanoid robots exhibit high DOF, strongly coupled, multimodal dynamics, making conventional low-dimensional representation learning methods inadequate for robustness and transient behavior modeling~\cite{Radosavovic2023Real-world, wang2024toward}. In contrast, the embedding in \textbf{Theorem 1} bounds policy sensitivity by $M$ and ensures observation noise $\delta_s^t$ degrades $J(\pi)$ only proportionally, yielding a principled representation. Otherwise, for systems with $L_f \ge 1$~\cite{1241826}, the $L_f^H$ term causes exponential error growth over horizon $H$, so minor noise leads to severe degradation. This motivates a more geometric, contractive embedding space where noise is inherently attenuated without $L_f^H$ amplification, thereby improving robustness.

\subsection{Analysis of Contractive Embedding Space}
\begin{theorem}[Return Gap in Contractive Embedding Space]
\label{Return Gap in Contractive Embedding Space}
To enhance robustness in humanoid locomotion, we map states into a contractive latent space via an embedding projection $\phi$, where disturbances attenuate over time. Let $\phi$ be a contractive embedding with $0 < \kappa < 1$,  \( a_t \sim \pi(\cdot|s_t) \), \( a_t^* \sim \pi^*(\cdot|s_t) \), $\epsilon_t=\|\phi(f(s_t, a_t)) - \phi(f(s_t, a_t^*))\|$, current policy \( \pi \), and optimized policy \( \pi^* \) satisfy \( \mathbb{E}[\|a_t - a_t^*\|] \leq \eta \) for all \( t \)~\cite{8344843} satisfying:
\[
\|\phi(s_{t+1}) - \phi(s_{t+1}')\| \leq \kappa \|\phi(s_t) - \phi(s_t')\| + \epsilon_t,
\]
then,
\[
J(\pi^*) - J(\pi) \leq \mathcal{O}\left( \frac{\eta}{1 - \kappa} \right).
\]
\end{theorem}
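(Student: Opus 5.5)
The plan is to couple the two trajectories generated by $\pi$ and $\pi^*$ from the same initial state, measure their divergence in the embedding space, and sum the discounted reward differences. Write $s_t$ for the state under $\pi$ and $s_t'$ for the state under $\pi^*$, with $s_0 = s_0'$. Let $d_t := \|\phi(s_t) - \phi(s_t')\|$.

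The first step is to bound the per-step input $\epsilon_t$. I would assume that $\phi\circ f$ is Lipschitz in the action with some constant $L_{\phi f}$ (implicitly absorbed in the $\mathcal{O}(\cdot)$). Then $\mathbb{E}[\epsilon_t] \le L_{\phi f}\,\mathbb{E}\|a_t - a_t^*\| \le L_{\phi f}\,\eta$. Taking expectations in the contraction hypothesis gives $\mathbb{E}[d_{t+1}] \le \kappa\, \mathbb{E}[d_t] + L_{\phi f}\eta$. Unrolling this with $d_0=0$ yields
\[
\mathbb{E}[d_t] \le L_{\phi f}\,\eta \sum_{k=0}^{t-1}\kappa^k \le \frac{L_{\phi f}\,\eta}{1-\kappa},
\]
uniformly in $t$. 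This is exactly where contraction replaces the $L_f^H$ amplification of Theorem~\ref{Return Gap under Observation Noise}: the error is a geometric series that stays bounded rather than growing exponentially.

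The second step converts embedding distance into reward difference. I would assume the reward factors through the embedding and is Lipschitz there, i.e.\ $|r(s,a)-r(s',a')| \le L_r(\|\phi(s)-\phi(s')\| + \|a-a'\|)$. This is the natural reading of ``task-relevant geometry is preserved'' by the contrastive objective. Then
\[
J(\pi^*)-J(\pi) \le \sum_t \gamma^t L_r\big(\mathbb{E}[d_t] + \eta\big) \le \frac{L_r}{1-\gamma}\Big(\frac{L_{\phi f}}{1-\kappa}+1\Big)\eta .
\]
Since $1 \le \frac{1}{1-\kappa}$, the right-hand side is $\mathcal{O}\big(\eta/(1-\kappa)\big)$, with $L_r$, $L_{\phi f}$ and $(1-\gamma)^{-1}$ treated as constants. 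In the finite-horizon version the factor $(1-\gamma)^{-1}$ becomes $H$, and this is again absorbed into the constant.

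The main obstacle is the second step. Contraction is stated only for $\phi$, so relating rewards, which are defined on $s$, to $\phi$-distances requires an extra regularity assumption: either that $r$ is Lipschitz in the latent, or that $\phi$ is bi-Lipschitz onto its image, with inverse constant $L_\phi^{-1}$ absorbed into the $\mathcal{O}$. I would state this explicitly and justify it by the contrastive term, which prevents collapse of the embedding. A secondary subtlety is that the coupling uses the same noise realization for both trajectories and that the hypothesis $\mathbb{E}\|a_t-a_t^*\|\le\eta$ is taken along the coupled trajectory. I would simply adopt this as the interpretation of the statement.
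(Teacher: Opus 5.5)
Your proposal follows the paper's proof essentially step for step. The paper also bounds $\epsilon_t \le L_\phi L_f\|a_t-a_t^*\|$ (your composite constant $L_{\phi f}$) and unrolls the contraction recursion from $\phi(s_0)=\phi(s_0')$ to get $\mathbb{E}\|\phi(s_t)-\phi(s_t')\| \le L_\phi L_f\eta/(1-\kappa)$; it then silently makes the same assumption you flag, namely that the reward is Lipschitz in the latent ($L_r$) and in the action ($L_r'$), before summing over time. Your explicit $\gamma^t$-weighted sum, whose factor $(1-\gamma)^{-1}$ you absorb into the constant, is more careful than the paper's claim that the horizon sum is ``independent of $H$'', which only holds under discounting.
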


\textbf{Global stability through contraction.} In this contractive latent space, state transitions become more regular and predictable, as the contraction property ensures that differences between trajectories shrink over time at rate $\kappa$. This regularity directly simplifies policy optimization by reducing the effective complexity of the underlying dynamics, enabling deep RL algorithms to learn stable and efficient policies. Moreover, \textbf{Theorem 2} quantifies this benefit: the return gap is bounded by $\mathcal{O}\left(\frac{\eta}{1-\kappa}\right)$, where $\eta$ bounds the expected action discrepancy and $\frac{1}{1-\kappa}$ captures the accumulation of errors under contraction. This analysis reveals how $\kappa$ precisely governs the trade-off between expressiveness (larger $\kappa$) and robustness (smaller $\kappa$), providing a principled way to limit performance degradation from observation noise.

\textbf{Long-horizon robustness against error growth.} Contraction embeddings constrain similar states to remain close in the latent space, thereby suppressing the propagation of errors across time steps $H$. It is worthwhile to note that, compared with \textbf{Theorem 1}, this expression is independent of horizon length $H$, making it significantly more tractable for long-horizon control tasks.

\textbf{Align high‑dimensional observations with policy optimization.} The high-dimensional, nonlinear observation spaces (e.g., images, depth maps) in real-world humanoid locomotion, coupled with the robots' strongly coupled dynamics and vast state spaces, fundamentally lack geometric regularities. Here contractive representation learning is crucial: it maps complex observations into a low-dimensional latent space with $\kappa<1$ dynamics, thereby preserving long-horizon robustness and enabling policy optimization with simplified and geometry-aware dynamics.

Accordingly, let the noise be $\delta_s^t \sim \mathcal{U}([-\zeta, \zeta]^d)$. Then the bias and variance of the policy‑gradient feature $g(s)$ satisfy
\begin{equation}
\label{viars}
\bigl\|\mathbb{E}_\delta\big[g(\tilde{s})\big]-g(s)\bigr\| \;\le\; \kappa L_\pi\, \zeta \sqrt{\tfrac{d}{3}},
\end{equation}
\begin{equation}
\operatorname{Var}_\delta\!\big[g(\tilde{s})\big] \;\le\; \frac{\zeta^2}{3}\,\kappa^2 L_\pi^2\, d.
\end{equation}
These bounds establish that, under uniform noise, both the bias and variance of $g(s)$ are explicitly bounded, thereby enhancing policy robustness. Nevertheless, we acknowledge that \textbf{Theorem 2} may yield suboptimal performance compared to \textbf{Theorem 1} when the contraction strength $\eta$ is too excessive, as constrained by $\mathbb{E}[|a_t - a_t^*|] \le \eta$, which may lead to task and environment relevant information loss.

\subsection{Objective Functions for Contractive Embeddings}
Grounded in the foregoing theory, our objectives jointly promote \textbf{dynamic contraction} while preserving sufficient \textbf{semantic discrimination}, yielding an ideal embedding space satisfying \textbf{Theorem 2}. For dynamic contraction, we introduce a Lipschitz regularization term explicitly constraining the evolution of differences in the embedding space,
\begin{equation}
\mathcal{L}_{\text{Lipschitz}} = \mathbb{E}_{(s_t, s'_t, s_{t+1}, s'_{t+1})} \left[ {S}_{t+1} - \kappa^2 {S}_t \right]_+,
\end{equation}
where ${S}_t = \|\phi(s_t) - \phi(s'_t)\|^2$, \( [x]_+ = \max(x, 0) \) ensures the penalty applies only when the contraction condition is violated. $\mathcal{L}_{\text{Lipschitz}}$ aims to build the contractive mapping space $\phi(s)$ with $0 < \kappa < 1$. It also endows the policy with inherent self-stabilizing properties, enabling the robot to intrinsically maintain performance under external disturbances. Complementarily, semantic discrimination is attained by contrastive learning through the InfoNCE loss~\cite{oord2018representation}:
\begin{equation} 
\label{InfoNCE}
\mathcal{L}_{\text{InfoNCE}} = - \sum_{i=1}^N \log \frac{\text{w}_{i,j^+}}{\text{w}_{i,j^-} + \sum_{j=1}^N \text{w}_{i,j^+}},
\end{equation}
where $\text{w}_{i, j^{\sigma}} = \exp\!\left({\mathrm{sim}\big(\phi(s_i), \phi(s_{j^{\sigma}})\big)} /{\tau}\right)$, $ \sigma \in \{+, -\}$, $s_i^+$ and \( s_j^- \) respectively represent a positive and negative sample from the trajectory as \( s_i \) respectively, and \( \tau \) is the temperature parameter. This objective avoids excessive contraction that loses information in the contractive embedding space $\phi(s)$. We optimize the humanoid locomotion task's policy $\pi$ with Proximal Policy Optimization (PPO)~\cite{schulman2017proximal}, leveraging its parallel training capability. In addition to PPO optimization objective $\mathcal{L}_{\text{PPO}}$, we set $\lambda$ as a trade off between semantic discrimination and dynamic contraction.  Thus the total training objective $\mathcal{L}_{\text{CMR}}$:
\begin{equation}
\mathcal{L}_{\text{CMR}} = \mathcal{L}_{\text{InfoNCE}} + \lambda \cdot \mathcal{L}_{\text{Lipschitz}} + \mathcal{L}_{\text{PPO}}.
\label{equ:loss}
\end{equation}  
This formulation ensures that the embedding space $\phi(s)$ simultaneously preserves task-relevant information while guaranteeing the contraction properties necessary, making the CMR framework deployable in humanoid locomotion task.

\subsection{Training Settings and Sim-to-sim Transfer}
The full observation space at each time step $t$ is defined as $ \tilde{s}_t = \left\{ c_t,\ o^{\text{proprio}}_t,\ o^{\text{percept}}_t,\ a_{t-1}\right\}$. The noisy observation space $\tilde{s}_t$ consists of four components: command inputs $c_t \in \mathbb{R}^3$ specifying desired velocity commands (longitudinal velocity $v_x$, lateral velocity $v_y$, and angular yaw velocity $\omega_{\text{yaw}}$); proprioceptive observations $o^{\text{proprio}}_t \in \mathbb{R}^{64}$ comprising base angular velocity $\omega_t \in \mathbb{R}^3$, joint positions $\theta_t \in \mathbb{R}^{29}$, and joint velocities $\dot{\theta}_t \in \mathbb{R}^{29}$; perceptive observations $o^{\text{percept}}_t \in \mathbb{R}^{15 \times 15}$ representing an egocentric elevation map centered around the robot, covering a $1.5\,\text{m} \times 1.5\,\text{m}$ area with $0.1\,\text{m}$ resolution, and the previous time step's action $a_{t-1} \in \mathbb{R}^{12}$ or $a_{t-1} \in \mathbb{R}^{29}$  providing temporal continuity. In the lower-body control setting, only the 12 actuated lower-body joints receive time-varying references from the policy; the remaining upper-body joints remain at their default setpoints. For completeness, we also consider a whole-body variant in which the action space is expanded to $a_t \in \mathbb{R}^{29}$ and consequently $a_{t-1}\in\mathbb{R}^{29}$ in the observation. The noise magnitude is governed by a global noise level $\alpha$ that uniformly scales the predefined base noise across observation channels. To mitigate the sim-to-real gap, we inject noise exclusively into the perception observations during training following Beam Dojo\cite{wang2025beamdojo}, with a moderate $\alpha = 1$. To stress-test robustness, all noise configurations are further amplified in our experiments.

The total training loss combined has detailed in Equ~\ref{equ:loss}. We trained the policy using a double critic framework following the heuristic of Beam Dojo~\cite{wang2025beamdojo}, on a single NVIDIA A100 GPU. Initially, ground-truth environmental data in \textit{Isaac Gym }simulator is employed to train for computational efficiency. To further validate its generalization, the policy trained is transferred into \textit{MuJoCo} under noisy disturbtions.
Additionally, key hyperparameters used in the training process are listed in the appendix.

\section{experiments}
This section presents complete evaluations. We aim to answer the following questions Is CMR more effective and robust than the baselines? (2) Does each component of CMR contribute to improving its performance? (3) How does CMR generalize in zero-shot transfer scenarios?

\begin{table*}[!ht]

    \caption{Different algorithms' performance comparison under Noise Cases I/II.
    }
    \label{noisy performance}
    \begin{center}
        \renewcommand\arraystretch{1.2}

        \resizebox{\linewidth}{!}{
        \begin{tabular}{c|c|c|c|c|c|c}
        \hline \thickhline
        \multirow{2}{*}{\textbf{Experiment Settings}} & 
        \multicolumn{6}{c}{\textbf{Environment: Uneven Terrain \& Disturbance Configurations: Noise Case I}}  \\
        \cline{2-7}
        & \textbf{Joint Power (\(\downarrow\))} & \textbf{Action Rate (\(\downarrow\))} & \textbf{Action Smoothness (\(\downarrow\))} & \textbf{Dof Vel (\(\downarrow\))} & \textbf{Dof Acc (\(\downarrow\))} & \textbf{Distance (\(\uparrow\))}  \\
        \hline \hline
HIM & 
$1971.9$\ci{712.2} & $10.6$\ci{4.7} & $10.8$\ci{4.8} & $170.4$\ci{66.9} & $430428.8$\ci{170306.7} & $2.3$\ci{0.8} \\

LCP & 
$1333.3$\ci{1526.7} & $4.1$\ci{2.4} & $164.8$\ci{861.3} & $113.1$\ci{126.4} & $282492.8$\ci{242580.8} & $5.8$\ci{1.8} \\

Naive PPO & 
$13618.3$\ci{1105.7} & $27.0$\ci{2.2} & $5431.7$\ci{617.4} & $1203.9$\ci{95.5} & $2128629.0$\ci{151030.2} & $0.1$\ci{0.0} \\

{\textbf{CMR{ \ourscell{(Ours)}}}} & 
$\mathbf{1120.7}$\ci{277.5} & $\mathbf{4.1}$\ci{1.4} & $\mathbf{4.5}$\ci{1.6} & $\mathbf{84.5}$\ci{24.6} & $\mathbf{213465.3}$\ci{69684.6} & $\mathbf{7.0}$\ci{1.3} \\
        \hline 
        \end{tabular}
        }

        \vspace{0.2em}  

        \resizebox{\linewidth}{!}{
        \begin{tabular}{c|c|c|c|c|c|c}
        \hline 
        \multirow{2}{*}{\textbf{Experiment Settings}} & 
        \multicolumn{6}{c}{\textbf{Environment: Uneven Terrain \& Disturbance Configuration: Noise Case II}} \\
        \cline{2-7}
        & \textbf{Joint Power (\(\downarrow\))} & \textbf{Action Rate (\(\downarrow\))} & \textbf{Action Smoothness (\(\downarrow\))} & \textbf{Dof Vel (\(\downarrow\))} & \textbf{Dof Acc (\(\downarrow\))} & \textbf{Distance (\(\uparrow\))}  \\
        \hline \hline
HIM & 
$1486.2$\ci{885.4} & $8.1$\ci{5.6} & $8.6$\ci{5.9} & $133.9$\ci{84.2} & $372216.0$\ci{235641.8} & $1.3$\ci{0.6} \\

LCP & 
$\mathbf{976.3}$\ci{795.0} & $3.3$\ci{2.6} & $13.3$\ci{248.5} & $83.2$\ci{75.0} & $217245.5$\ci{180823.6} & $4.9$\ci{1.7} \\

Naive PPO & 
$13943.9$\ci{383.7} & $23.2$\ci{1.3} & $4103.4$\ci{234.1} & $1135.8$\ci{38.3} & $1874354.3$\ci{80916.9} & $0.2$\ci{0.0} \\

{\textbf{CMR{ \ourscell{(Ours)}}}} & 
$1012.3$\ci{350.3} & $\mathbf{3.2}$\ci{1.5} & $\mathbf{3.4}$\ci{1.6} & $\mathbf{77.8}$\ci{31.3} & $\mathbf{184686.9}$\ci{82814.0} & $\mathbf{7.2}$\ci{1.4} \\

        \hline \thickhline
        \end{tabular}
        }

    \end{center}
    \label{table:uneven}
    \vspace*{-1pt}
\end{table*}

\begin{table*}[!h]
    \caption{Different algorithms' performance comparison under dual command configurations.
    }
    \label{table:command ood}
    \begin{center}
        \renewcommand\arraystretch{1.2}

        \resizebox{\linewidth}{!}{
        \begin{tabular}{c|c|c|c|c|c|c}
        \hline \thickhline
        \multirow{2}{*}{\textbf{Experiment Settings}} & 
        \multicolumn{6}{c}{\textbf{Environment: Uneven Terrain (Walking Random) \& Disturbance Configurations: Noise Case I}} \\
        \cline{2-7}
        & \textbf{Joint Power (\(\downarrow\))} & \textbf{Action Rate (\(\downarrow\))} & \textbf{Action Smoothness (\(\downarrow\))} & \textbf{Dof Vel (\(\downarrow\))} & \textbf{Dof Acc (\(\downarrow\))} & \textbf{Distance (\(\uparrow\))}  \\
        \hline \hline

HIM & 
$3523.4$\ci{950.9} & $9.9$\ci{3.3} & $10.1$\ci{3.3} & $162.4$\ci{43.4} & $414386.5$\ci{110425.4} & $2.1$\ci{0.8} \\

LCP & 
$2451.5$\ci{3230.6} & $\mathbf{3.8}$\ci{1.3} & $213.6$\ci{969.4} & $116.6$\ci{134.1} & $282361.0$\ci{230976.4} & $5.6$\ci{1.7} \\

Naive PPO & 
$26832.2$\ci{1408.3} & $27.5$\ci{1.7} & $5608.1$\ci{354.8} & $1230.6$\ci{57.9} & $2167369.3$\ci{104718.8} & $0.08$\ci{0.02} \\

{\textbf{CMR{ \ourscell{(Ours)}}}} & 
$\mathbf{1854.7}$\ci{345.6} & $3.9$\ci{0.8} & $\mathbf{4.4}$\ci{1.0} & $\mathbf{81.0}$\ci{13.8} & $\mathbf{206715.1}$\ci{39190.9} & $\mathbf{6.1}$\ci{1.5} \\
        \hline 
        \end{tabular}
        }

        \vspace{0.3em}  

        \resizebox{\linewidth}{!}{
        \begin{tabular}{c|c|c|c|c|c|c}
        \hline 
        \multirow{2}{*}{\textbf{Experiment Settings}} & 
        \multicolumn{6}{c}{\textbf{Environment: Uneven Terrain (Walking Random OOD) \& Disturbance Configurations: Noise Case II}} \\
        \cline{2-7}
        & \textbf{Joint Power (\(\downarrow\))} & \textbf{Action Rate (\(\downarrow\))} & \textbf{Action Smoothness (\(\downarrow\))} & \textbf{Dof Vel (\(\downarrow\))} & \textbf{Dof Acc (\(\downarrow\))} & \textbf{Distance (\(\uparrow\))}  \\
        \hline \hline

HIM & 
$2505.0$\ci{558.9} & $12.9$\ci{2.9} & $13.3$\ci{2.9} & $208.7$\ci{34.9} & $515972.8$\ci{86794.4} & $2.1$\ci{0.6} \\

LCP & 
$1771.8$\ci{1544.0} & $\mathbf{5.6}$\ci{1.1} & $182.0$\ci{868.4} & $157.4$\ci{112.8} & $390014.1$\ci{190081.3} & $5.0$\ci{1.3} \\

Naive PPO & 
$13936.9$\ci{1752.7} & $26.2$\ci{2.9} & $5212.2$\ci{864.7} & $1168.6$\ci{133.5} & $2074207.0$\ci{203178.2} & $0.09$\ci{0.05} \\

{\textbf{CMR{ \ourscell{(Ours)}}}} & 
$\mathbf{1395.6}$\ci{283.5} & $5.9$\ci{1.1} & $\mathbf{6.9}$\ci{1.5} & $\mathbf{123.5}$\ci{18.5} & $\mathbf{316543.9}$\ci{53703.2} & $\mathbf{6.2}$\ci{0.8} \\
        \hline \thickhline
        \end{tabular}
        }
    \end{center}
    \label{table:uneven}
    \vspace*{-1pt}
\end{table*}

\subsection{Evaluation Setups}
Baselines. We compare our approach against existing baselines in the context of humanoid locomotion. \textbf{HIM:} It uses a hybrid internal embedding representation~\cite{long2023hybrid}. Long et~\cite{long2024learning} integrate it into humanoid control. \textbf{LCP:} This method implements Lipschitz constraint with a gradient penalty~\cite{chen2024learning}. \textbf{Naive PPO:} It is a plain PPO baseline.

Disturbance Configurations. We consider two noise categories: proprioceptive and perceptual observation noise. Five disturbance scenarios are considered: \textbf{Noise Case \Romannum{1}:} High-level noise with $\alpha=2$ applied exclusively to the exteroceptive height map input.\textbf{ Noise Case \Romannum{2}:} Mild noise with $\alpha=0.01$ applied to proprioceptive and perceptive observations. \textbf{Noise Case \Romannum{3}-\Romannum{5}:} Noise applied to both observational sources with increasing intensity: moderate $\alpha=1$, high $\alpha=2$, and extreme $\alpha=3$ respectively.

Dual robot configurations are conducted on the Unitree G1: (1) a constrained 12-DOF setup with the upper body locked, and (2) a full-body configuration with all DOF active. This configuration enables comprehensive assessments of algorithms' robustness across different kinematic complexities.

Experiment metrics. We employ diverse metrics for evaluations, including Joint Power, Action Rate (control effort magnitude), Action Smoothness, DoF Velocity, DoF Acceleration, Distance Traveled, and Command Tracking Error.

\subsection{Main Results in Simulation}
 \textbf{Noise Rejection and Stability.} We evaluate the proposed method’s performance across multiple metrics under identical experimental settings (Table.~\ref{noisy performance}). Compared with the naive PPO baseline, incorporating the embedding components consistently improves overall performance. In the first scenario, under perceptual noise levels exceeding those in training configurations, controllers utilizing embedding components largely maintain stable and effective locomotion. In the second scenario, where mild noise affects all observations, most methods exhibit a noticeable drop in performance. Conversely, CMR consistently delivers superior performance and preserves locomotion quality across both scenarios. Although LCP demonstrates strength in joint power under Noise Case \Romannum{2}, it underperforms in other metrics. Notably, proprioceptive noise has a greater impact on performance degradation than perceptual noise in comparison between Noise Case \Romannum{1} and \Romannum{2}. Thus experiment below will focus more on the combined effects of both noise sources. Overall, our approach maintains the lowest power consumption, smoothest control actions, and the most stable internal dynamics, showcasing exceptional noise-rejection capability and locomotion stability. 

\textbf{Random Velocity and Out-of-distribution Conditions.} To evaluate our algorithms’ performance under random velocity commands, we conduct an additional experiment (Table.~\ref{table:command ood}). Even in the OOD setting, CMR achieves the longest travel distance and the lowest Action Smoothness score, while operating with the lowest joint power. CMR demonstrates optimal generalization to OOD scenarios coupled with excellent noise immunity, among all algorithms. We hypothesize that the contractive embedding space $\phi{(s)}$, inherently instills a self-stability attribute in the robot, optimized by the CMR objective. Consequently, the humanoid robot is expected to exhibit superior robustness. Intriguingly, when subjected to amplified noise, the robot executes locomotion tasks more thoroughly, yielding unexpectedly greater walking distances, as corroborated by data presented in Table.~\ref{noisy performance} and Table.~\ref{table:command ood}.

\textbf{Discriminative Embeddings via t-SNE Visualization.}
To elucidate the mechanisms underlying CMR's superiority, we employed t-SNE~\cite{maaten2008visualizing} to visualize latent embeddings from ours and HIM. In the presence of noise, CMR's trajectories (Fig.~\ref{fig:ts1}) exhibit significantly less deviation, validating its effective maintenance of robot self-stability and resilience in the contractive space. Across diverse terrains under amplified noise, CMR also yields more distinctly separated clusters (Fig.~\ref{fig:ts2}), underscoring its retained expressive capacity and environmental identification even within such a contractive space. Collectively, we find that CMR strikes an effective balance between dynamic contraction and semantic expressiveness and prove the advantages of \textbf{Theorem 2}, intrinsically explaining our approach's improved performance.

;
\begin{figure}[!htbp]
  \centering

  \begin{minipage}{.95\linewidth}
    \centering
    \includegraphics[width=\linewidth]{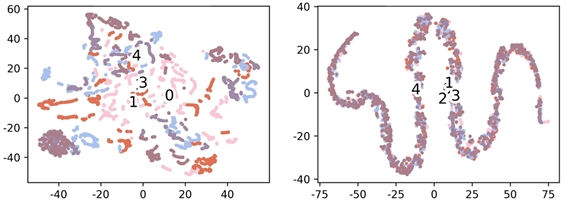}
    \captionof{figure}{Latent space visualizations of five trajectories from deployments of CMR \ourscell{(\textbf{left})} and HIM \ourscell{(\textbf{right})} under Noise Case I.}
    \label{fig:ts1}
  \end{minipage}

  \vspace{0.8em}

  \begin{minipage}{.95\linewidth}
    \centering
    \includegraphics[width=\linewidth]{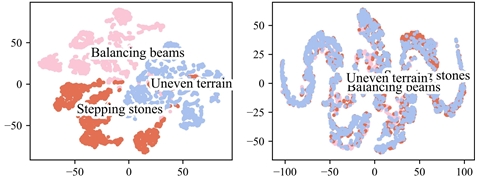}
    \captionof{figure}{Latent space visualizations of trajectories across terrains from deployments of CMR \ourscell{(\textbf{left})} and HIM \ourscell{(\textbf{right})}.}
    \label{fig:ts2}
  \end{minipage}

\end{figure}

\begin{figure}[t]
  \centering

  \begin{minipage}{\linewidth}
    \centering
    \includegraphics[width=\linewidth]{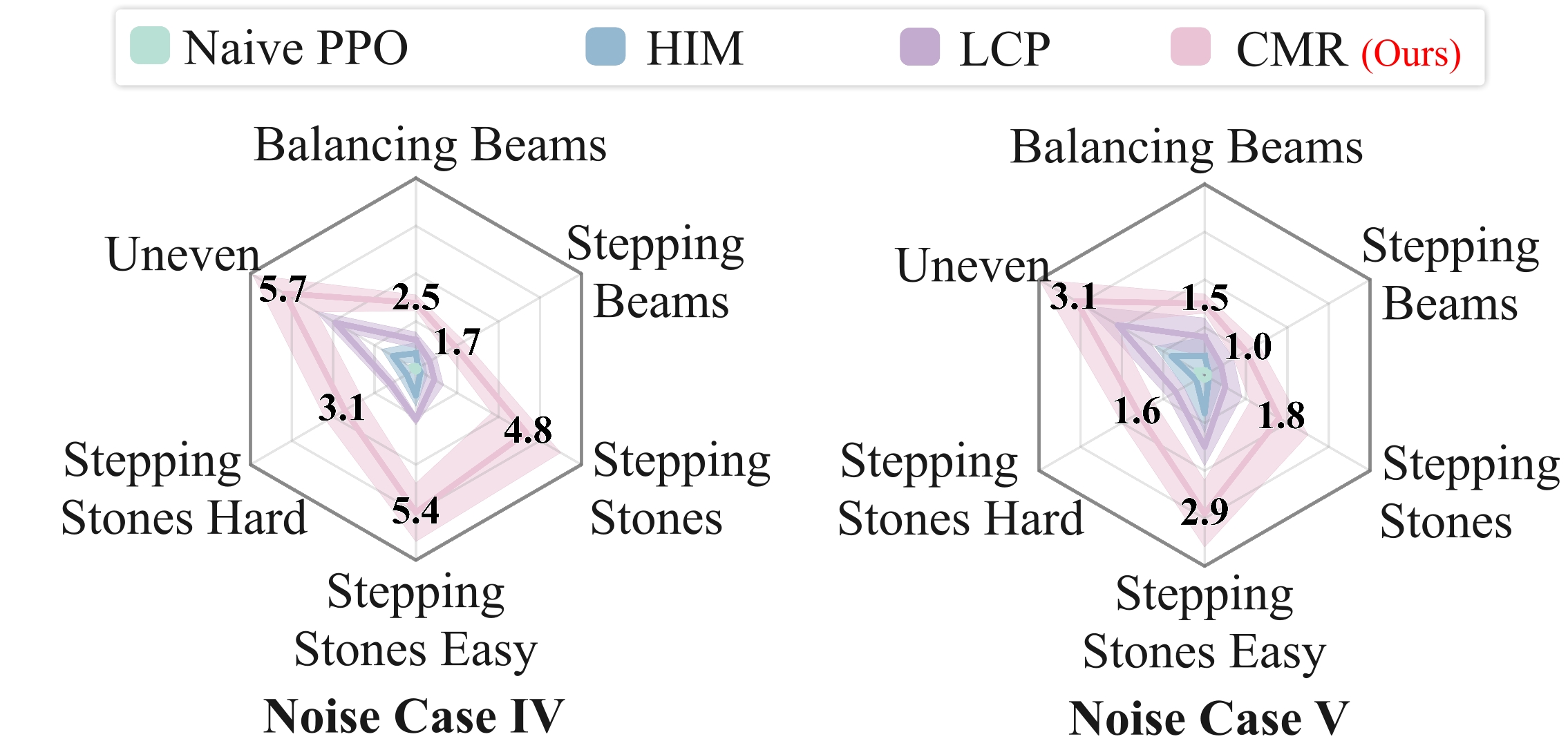}
    \captionof{figure}{Locomotion Distances for deployment of CMR \ourscell{(\textbf{Ours})} and baselines across varing terrains under Noise Case \Romannum{4}/\Romannum{5}.}
    \label{fig:noise_level_comparison}
  \end{minipage}

  \vspace{0.8em}

  \begin{minipage}{\linewidth}
    \centering
    \includegraphics[width=\linewidth]{figures/ablation_experiments_distance_results.png}
    \captionof{figure}{Ablation studies on key hyperparameters in CMR.}
    \label{fig:ablation_experiments_distance_results}
  \end{minipage}

\end{figure}


\textbf{Adaptability Across Diverse Terrain Types.} To further evaluate the environmental adaptation and the locomotion capability in unstructured terrains, we conduct experiments on six challenging and diverse terrains, such as uneven terrains, stepping stones, and balance beam (Fig.~\ref{fig:terrains sketch}). The results in Fig.~\ref{fig:terrains sketch} and Fig.~\ref{fig:noise_level_comparison} show that, across all terrain types and difficulty levels, the robot equipped with the proposed representation contractive embedding consistently covers markedly longer distances than all baseline approaches. 

\subsection{Ablation Studies}
An ablation study was conducted to evaluate the sensitivity of CMR to temperature $ \tau $, Lipschitz parameter $\lambda$ , and contractive factor $\kappa$, under the same simulation settings. As shown in Fig.~\ref{fig:ablation_experiments_distance_results}, the results collectively demonstrate a consistent trade-off between trajectory accuracy (Distance) and energy consumption (Joint Power) across all parameters. Increasing $\tau$ improves accuracy at the cost of higher energy use and reduced motion dynamics (DoF Vel, DoF Acc), while stronger Lipschitz regularization $\lambda$ enhances efficiency and smoothness but may limit responsiveness. Similarly, higher $\kappa$ values suppress oscillations and improve stability, albeit with increased energetic overhead. These inter dependencies validate our default configuration ($\tau$ = 0.04, $\lambda=0.05$, $\kappa=0.1$) as a balanced compromise for achieving robust and efficient robotic performance across diverse scenarios.

\subsection{Sim-to-sim Transfer}
To validate the generalization capability of CMR, we perform a sim-to-sim transfer by deploying the trained policies in the \textit{MuJoCo} simulation environment. Within \textit{MuJoCo}, we generate the command $c_t$ at each time step by sampling from a uniform distribution under Noise Case I. As shown in Table~\ref{table:sim2sim}, not only dose CMR track command better than HIM, but it also consumes pronouncedly less joint power, suggesting more energy-efficient control. These results demonstrate CMR’s strong zero-shot transfer ability.
\begin{table}[t]
    \begin{center}
        \renewcommand\arraystretch{1.1}
        \resizebox{\linewidth}{!}{
        \begin{tabular}{c|cc}
        \hline \thickhline
        \textbf{Metric} & \textbf{HIM} & \textbf{CMR \scriptsize{(Ours)}} \\
        \hline \hline
        Command Tracking Error (\(\downarrow\)) & 2.51\ci{0.22} & \textcolor{deepred}{\textbf{2.36}\ci{0.63}} \\
        DoF Vel (\(\downarrow\)) & 81.09\ci{18.68} & \textcolor{deepred}{\textbf{71.34}\ci{22.25}} \\
        Action Rate (\(\downarrow\)) & 2.84\ci{0.39} & \textcolor{deepred}{\textbf{2.67}\ci{0.35}} \\
        Joint Power (\(\downarrow\)) & 1465.43\ci{902.67} & 
        \textcolor{deepred}{\textbf{1023.32}\ci{576.21}} \\
        \hline \thickhline
        \end{tabular}
        }
    \end{center}
    \caption{Sim-to-sim transfer performance comparison between HIM and {CMR{ \ourscell{(\textbf{Ours})}}} across four evaluation metrics.}
    \label{table:sim2sim}
\end{table}

\section*{CONCLUSIONS}
In this paper, we propose a \textit{Contraction Mapping Robustness framework} (CMR) for robust humanoid locomotion in the presence of observation noise. CMR is the first framework to incorporate the contraction mapping theorem into learning-based robust humanoid locomotion. Its combination of contrastive representation learning and Lipschitz regularization enables explicit sensitivity control while remaining compatible with modern deep RL pipelines. Theoretical analysis further establish return-gap bounds under observation noise when latent dynamics are contractive. Comprehensive experiments indicate that CMR maintains a marked lead over alternative locomotion algorithms under amplified noise. Moreover, we are verifying transfer performance through real-robot experiments, thereby strengthening practical credibility. Moving forward, CMR will be augmented with more multi-modal perception such as depth cameras, to perform better in more precise and complex terrains.

\FloatBarrier

\section*{Appendix}

\textbf{(1) Proof of Theorem 1: Return Gap under Observation Noise}\\
\begin{proof}
Setup. We consider $\tilde{s}_t = s_t + \delta_s^t$ with $||\delta_s^t|| \leq \delta_{max}$, and $a_t = \pi_t(\tilde{s}_t)$. The only error source is observation noise. The policy Jacobian satisfies $||\nabla_s \pi_t(s)|| \leq M$.
Goal. Upper bound $J(\pi^*) - J(\pi)$.
The action deviation is bounded as follows: $\|a_t - a_t^*\| = \|\pi(\tilde{s}_t) - \pi(s_t^*)\| \leq M \|\tilde{s}_t - s_t^*\| 
                    \leq M (\|s_t - s_t^*\| + \|\delta_s^t\|) \leq M(x_t + \delta_{max}).$ By Lipschitz continuity of the dynamics $f$: $x_{t+1} = \|f(s_t, a_t) - f(s_t^*, a_t^*)\| 
            \leq L_f (\|s_t - s_t^*\| + \|a_t - a_t^*\|)
            \leq L_f (x_t + M \delta_{max}).$
Bounding $d_t$ via Unfolding. With $d_0=0$:
$$x_t \leq L_f M \delta_{max} \sum_{i=0}^{t-1} L_f^i 
        = \mathcal{O}(L_f^tM \delta_{max}).$$

Return Gap Bound. For discrete reward functions in humanoid locomotion, we approximate them using Gaussian kernel functions~\cite{wang2024toward} to ensure the Lipschitz continuity of the reward with a bounded constant $L_r$. At each step, using the $L_r$-Lipschitz continuity of the reward function $r$:
    $$|r(s_t, a_t) - r(s_t^*, a_t^*)| \leq L_r (x_t + \|a_t - a_t^*\|) 
                                   \leq L_r (x_t + M \delta_{max}).$$

Summing over horizon $H$ gives the final bound:
\[
J(\pi^*) - J(\pi) \leq \mathcal{O}\!\left(H L_r L_f^H\right) \cdot M \delta_{max} .
\]
\end{proof}
\textbf{(2) Proof of Theorem 2: Return Gap in Contractive Embedding Space}\\
\begin{proof}
The proof analyzes the deviation between a trajectory from policy $\pi$ and one from the optimal policy $\pi^*$. Let $s_t \sim P^\pi$ and $s_t' \sim P^{\pi^*}$ be the states at timestep $t$. We define the latent space deviation as $\eta_t := \phi(s_t) - \phi(s_t')$. From the theorem's premise, the latent dynamics follow $||\phi(s_{t+1}) - \phi(s_{t+1}')|| \leq \kappa ||\eta_t|| + \epsilon_t$, where $\epsilon_t = ||\phi(f(s_t, a_t)) - \phi(f(s_t, a_t^*))||$. We bound $\epsilon_t$ using the Lipschitz continuity of $\phi$ (constant $L_\phi$) and $f$ (constant $L_f$): $\epsilon_t \leq L_\phi ||f(s_t, a_t) - f(s_t, a_t^*)|| \leq L_\phi L_f ||a_t - a_t^*||.$\\

Substituting this back gives the single-step deviation bound:
$||\eta_{t+1}|| \le \kappa||\eta_{t}|| + L_{\phi}L_{f}||a_{t}-a_{t}^{*}||.$
Taking the expectation and unrolling with assumption $\mathbb{E}[||a_t - a_t^*||] \le \eta$: $\mathbb{E}[||\eta_t||] \le \frac{L_\phi L_f \eta}{1-\kappa} + \kappa^t ||\eta_0||.$ The per-step expected reward gap is bounded:
\begin{align*}
    &\mathbb{E}|r_t^\pi - r_t^{\pi^*}|\quad \le L_r \mathbb{E}[||\eta_t||] + L_r' \mathbb{E}[||a_t - a_t^*||] \\
    &\quad \le \frac{L_r L_\phi L_f}{1-\kappa}\eta+L_r' \eta\qquad+L_r \kappa^t ||\eta_0||.
\end{align*}
Summing over the horizon and assuming $\eta_0 = 0$, the total return gap is bounded independent of $H$:
\[
J(\pi^*) - J(\pi) \le \mathcal{O}\left(\frac{\eta}{1-\kappa}\right).
\] 
\end{proof}
\textbf{(3) Proof of Bias and Variance Bounds under Contraction}\\
The objective is to prove the following two inequalities for the policy feature $g$ under uniform observation noise $\delta \sim \mathcal{U}([-\zeta,\zeta]^d)$, given a contractive embedding $\phi$:
$$\|\mathbb{E}_\delta[g(\tilde{s})]-g(s)\| \leq \kappa L_\pi \zeta \sqrt{d/3},$$
$$ \mathrm{Var}_\delta[g(\tilde{s})] \leq \frac{\zeta^2 d}{3}\kappa^2 L_\pi^2. $$
$$\|g(\tilde{s}) - g(s)\| \le \kappa L_\pi \|\delta_s\|.$$

Proof of the Bias Bound.
We begin by bounding the norm of the expected difference.
\begin{align*}
    &\|\mathbb{E}_\delta[g(\tilde{s})] - g(s)\| 
    = \|\mathbb{E}_\delta[g(\tilde{s}) - g(s)]\| 
    \le \mathbb{E}_\delta[\|g(\tilde{s}) - g(s)\|] \\
    &\le \mathbb{E}_\delta[\kappa L_\pi \|\delta_s\|] 
    = \kappa L_\pi \mathbb{E}_\delta[\|\delta_s\|]
    \le \kappa L_\pi \zeta \sqrt{d/3}.
\end{align*}

Proof of the Variance Bound.
The variance of the policy feature $g(\tilde{s})$ is defined as $\mathrm{Var}_\delta[g(\tilde{s})]$.
\begin{align*}
    &\mathrm{Var}_\delta[g(\tilde{s})] 
    = \mathbb{E}_\delta[\|g(\tilde{s}) - \mathbb{E}_\delta[g(\tilde{s})]\|^2] 
    \le \mathbb{E}_\delta[\|g(\tilde{s}) - g(s)\|^2] \\
    &\le \mathbb{E}_\delta[(\kappa L_\pi \|\delta_s\|)^2] 
    = \kappa^2 L_\pi^2 \mathbb{E}_\delta[\|\delta_s\|^2]  = \kappa^2 L_\pi^2 \frac{\zeta^2 d}{3}.
\end{align*}

\textbf{(4) Key hyperparameters in the training process }
\begin{table}[h!]
\centering
\caption{Hyperparameters used for training.}
\label{tab:hyperparameters}
\renewcommand{\arraystretch}{0.65}
\setlength{\tabcolsep}{6pt} 
\begin{tabularx}{\linewidth}{c|c c}
\hline
\textbf{Category} & \textbf{Hyperparameter} & \textbf{Value} \\
\hline
General & Number of environments & 1200 \\
\hline
\multirow{5}{*}{Noise Scales} & DOF position & 0.05 \\
 & DOF velocity  & 1.0 \\
 & Linear velocity  & 0.1 \\
 & Angular velocity  & 0.5 \\
 & Gravity & 0.05 \\
\hline
\multirow{3}{*}{Height Measurements} & Extend probability & 0.6 \\
 & Vertical scale & 0.02 \\
 & Offset range & [-0.05, 0.02] \\
\hline
\multirow{3}{*}{Policy Network} & Actor hidden dimensions & {[}512, 256, 128{]} \\
 & Critic hidden dimensions & {[}512, 256, 128{]} \\
 & Activation function & elu \\
\hline
\multirow{2}{*}{Contractive Encoder} & Hidden dimensions & {[}256, 256, 128{]} \\
 & Activation function & elu \\
\hline
\multirow{3}{*}{Loss Coefficients} & Temperature ($ \tau $) & 0.04 \\
 & Kappa ($\kappa$) & 0.1 \\
 & Lambda ($\lambda$) & 0.05 \\
\hline
\end{tabularx}
\end{table}

\section*{ACKNOWLEDGMENT}

\bibliography{IEEEexample}  

\end{document}